\renewcommand\footnotetextcopyrightpermission[1]{} % removes footnote with conference information in first column
\begin{document}
\pagestyle{empty}

\title{A State Transition Model for Mobile Notifications \\ via Survival Analysis}

\author{Yiping Yuan}
\affiliation{\institution{LinkedIn Corporation}}
\email{ypyuan@linkedin.com}

\author{Jing Zhang}
\affiliation{\institution{University of Maryland}}
\email{jzhang86@umd.edu}

\author{Shaunak Chatterjee}
\affiliation{\institution{LinkedIn Corporation}}
\email{shchatte@linkedin.com}

\author{Shipeng Yu}
\affiliation{\institution{LinkedIn Corporation}}
\email{siyu@linkedin.com}

\author{Romer Rosales}
\affiliation{\institution{LinkedIn Corporation}}
\email{rrosales@linkedin.com}

% The default list of authors is too long for headers.
\renewcommand{\shortauthors}{Y. Yuan et al.}

\begin{abstract}
Mobile notifications have become a major communication channel for social networking services to keep users informed and engaged. As more mobile applications push notifications to users, they constantly face decisions on what to send, when and how. A lack of research and methodology commonly leads to heuristic decision making. Many notifications arrive at an inappropriate moment or introduce too many interruptions, failing to provide value to users and  spurring users' complaints. In this paper we  explore unique features of  interactions between mobile notifications and user engagement. We propose a state transition framework to quantitatively evaluate the effectiveness of notifications. Within this framework, we develop a survival model for badging notifications assuming a log-linear structure and a Weibull distribution. Our results show that this model achieves more flexibility for applications and  superior prediction accuracy than a logistic regression model. In particular, we provide an online use case on notification delivery time optimization to show how we make better decisions, drive more user engagement, and provide more value to users.

\end{abstract}

%
% The code below should be generated by the tool at
% http://dl.acm.org/ccs.cfm
% Please copy and paste the code instead of the example below. 
%
\begin{CCSXML}
<ccs2012>
 <concept>
  <concept_id>10010520.10010553.10010562</concept_id>
  <concept_desc>Computer systems organization~Embedded systems</concept_desc>
  <concept_significance>500</concept_significance>
 </concept>
 <concept>
  <concept_id>10010520.10010575.10010755</concept_id>
  <concept_desc>Computer systems organization~Redundancy</concept_desc>
  <concept_significance>300</concept_significance>
 </concept>
 <concept>
  <concept_id>10010520.10010553.10010554</concept_id>
  <concept_desc>Computer systems organization~Robotics</concept_desc>
  <concept_significance>100</concept_significance>
 </concept>
 <concept>
  <concept_id>10003033.10003083.10003095</concept_id>
  <concept_desc>
  s~Network reliability</concept_desc>
  <concept_significance>100</concept_significance>
 </concept>
</ccs2012>  
\end{CCSXML}

%\ccsdesc[500]{Computer systems organization~Embedded systems}
\ccsdesc{Information systems~Data mining}
\ccsdesc{Mathematics of computing~Survival analysis}
%\ccsdesc[100]{Networks~Network reliability}

\keywords{Mobile notifications; survival analysis; Weibull distribution;  accelerated failure-time model}

\maketitle

%\input{body}
%%%%%%%%%%%%%%%%%%%%%%%%%%%
\section{Introduction}
\label{sec:introduction}
Social networking services (e.g., Facebook, LinkedIn, Instagram, Twitter, WeChat) actively push information to their users through mobile notifications.  As the content ecosystem and users' connection networks grow, more and more information is generated on the social networking site that is worth  informing the users. On the other hand, users have limited attention span, regardless of how much value notifications could inform them of. The discrepancy between increasing content and limited user attention is the challenge many mobile applications are facing, especially those social networking applications. 

A mobile notification is a message displayed to the user either through the mobile application \textit{user interface} (UI) itself, or through the operating system's push notification services, such as \textit{Apple Push Notification Service} (APNs). Instances of such messages include  a user-to-user communication, a friend or connection request, an update from a friend or connection (e.g., birthday or job change), an article posted by a connection, etc. These notifications help keep the users informed of what is happening in their network. In addition, notifications also serve the purpose of promotions and product marketing for many mobile applications.

Compared with email communication, mobile notifications are more time sensitive and more promptly responded to \cite{fischer2011investigating,pielot2014situ}. Without an established way to determine delivery time, mobile notifications often arrive at inconvenient moments, failing to provide value to a user. Moreover, due to the pervasive nature of smartphones, such inconvenience may lead to complaints or even disablement on future notification deliveries, causing a permanent loss to both service providers and users. In short, sending notifications at the right time with the right content in many cases is critical.

%Specifically, we are interested in the following questions,
% \begin{itemize}
%\item How effective the mobile notifications promote app visits. Since users may visit the site regardless of notifications, the challenge is to learn the incremental effect of each notification.
%\item How the effect differs from user to user. This personalization, a typical machine learning problem, is important to our decision making.  
%\item Whether the effect is time sensitive. If so, we may be able to maximize the effect by choosing the right delivery time with the same set of notifications.
%\end{itemize}

In this paper, we focus on a quantitative way to measure the effectiveness of a mobile notification and to learn the pattern of how the effectiveness differs from  user to user and from time to time. The overall objective is to improve personalization and ensure better delivery time and volume optimization.

The interaction of a user with mobile notifications can be very complex and depends on numerous aspects \cite{mehrotra2015designing, mehrotra2014sensocial, xu2013preference}. It is common to link a notification event to one or more rewards to evaluate the effectiveness of a notification. For engagement, a typical reward is a visit from the user to the app.  One challenge for such a study is how to attribute a reward, because users may receive multiple notifications before they open and visit the app. Simple strategies could be to attribute the reward to the most recent one or to several notifications delivered within a look-back time period. Such strategies are hard to justify theoretically and could introduce significant bias in learning. Our strategy is to leverage the survival analysis to attribute a reward without ambiguity and bias \cite{buckley1979linear,james1984consistency}. %As a notification publisher, it is often necessary to make simplifications and assumptions about how to study this problem. 

Survival analysis is commonly used within medical and epidemiological research to analyze data where the outcome variable is the time until the occurrence of an event of interest. For example, if the event of interest is heart attack, then the time to event or survival time can be the time in years until a person develops a heart attack. In survival analysis, subjects are usually followed over a specified time period and the focus is on the time at which the event of interest occurs. Survival time has two components that must be clearly defined: a beginning point and an endpoint that is reached either when the event occurs or when the follow-up time has ended. If the event does not occur by the follow-up time, the observation is called censored. The censored observations are known to have a certain amount of time where the event of interest did not occur and it is not clear whether the event would have occurred if the follow-up time were longer. Such censoring is very common in observational notification data.

We introduce survival analysis to notification modeling as a new domain. The beginning point in this case is the delivery time of a notification and the endpoint is the reward time (e.g., the time of a visit) or a next notification delivery time, whichever happens first. When the next notification occurs first, the observation is censored. In this paper, we apply an accelerated failure-time model \cite{wei1992accelerated,keiding1997role} with a Weibull distribution to our large-scale user data for the reward prediction. This turns out to be not just novel, but also superior in prediction performance compared to baseline models in our offline analysis.  

We provide two example formulations for notification volume optimization (VO) and delivery time optimization (DTO) separately. We then present an online use case on notification DTO, where our model is used to make send decisions. The A/B test results show significant improvement on user engagement and content consumption over a non-DTO control and a baseline DTO model.

The major contributions of this paper can be summarized as follows.
 \begin{itemize}
 \item We develop a state transition model to measure the effectiveness of a notification through a  delta effect $\Delta F(W_0,T)$ in Section \ref{sec:main_model}. 
 \item We propose to estimate the delta effect in the presence of censored data, using a log linear survival structure and a Weibull distribution.
 \item We conduct offline evaluations with real-world notification data to
demonstrate the accuracy and flexibility of our engagement prediction.
 \item We carry out an online use case of determining the delivery time and show superiority of the proposed method with A/B tests in Section \ref{sec:usecase}.  
 \end{itemize}

\section{Related work}
\label{sec:related}

Email communication as a channel has a long history for social networking services. A volume optimization framework \cite{gupta2016email, gupta2017optimizing} can simultaneously minimize the number of emails sent, control the negative complaints, and maximize user engagement. While we share similar goals for mobile notifications, there are unique mobile aspects to be considered. Moreover, the volume optimization framework focuses on solving a Multi-Objective Optimization (MOO) problem \cite{agarwal2011click, agarwal2012personalized}, in which multiple objectives are optimized under given constraints. We focus on probabilistic nature of interactions between a user and a notification. Our work can be leveraged as a utility prediction model, which would be one of the utilities of interest in a MOO formulation for mobile notifications.

As more mobile applications push information to users, several studies have been carried out to understand how to make effective use of notifications. 
Sahami et al. \cite{sahami2014large} collect close to 200 million notifications
from more than 40,000 users including users' subjective
perceptions and present the first large-scale
analysis of mobile notifications. A number of findings about the nature of notifications, such as shorter responding time, have shed light on how to effectively use them.  Pielot et al. \cite{pielot2014situ} carry out an one-week, in-situ study involving 15 mobile phones users and suggested that  an increase in the number of notifications is associated with an increase in
negative emotions.  Both works do not attempt to model the interactions probabilistically.

Xu et al. \cite{xu2013preference} developed an app usage prediction model that leverages the user's day-to-day
activities, app preferences and  the surrounding environment. Mehrotra et al. \cite{mehrotra2015designing} developed a classification model to predict notification acceptance by considering both content and context information.  Pielot et al. \cite{pielot2014didn} proposed  a machine learning model  to predict whether the user will view a message within the next few minutes or not after a notification is delivered. Their study also suggests that indicators of availability, such as the last time the user has been online, not only create social pressure, but are also weak predictors of attentiveness of the message. Pielot et al. \cite{pielot2017beyond} carried out a field study with hundreds of mobile phone users and built a machine-learning model to predict whether a user will click
on the notification and subsequently engage with the content. The model can be used to determine the opportune moments to send notifications. These studies focus on cross-application study with  complete device information, yet the scale of  notifications and users are not comparable to our case.

On general user engagement,  extensive studies \cite{attenberg2009modeling, dave2014computational, wang2013psychological, ashley2015creative, khan2017social} have been promoting relevant and high quality content to users 
to maximize long-term user engagement with the platform. Other works \cite{goldstein2013cost, yoo2005processing} show that  low-quality advertising has detrimental effect on long-term user engagement.  Zhou et al. \cite{zhou2016predicting} developed an ad quality model based on logistic regression to identify offensive ads that affect user engagement. The focus has been on the quality instead of the timing.

Most applications of the survival analysis in the literature have  been in  medicine, biology or public health, but there is an increasing interest in its applications to social behavior. Survival techniques based on Weibull distributions have been applied to understanding and predicting dwell time on web services \cite{liu2010understanding, Vasiloudis:2017:PSL:3077136.3080695}. Yu et al. \cite{yu2017temporally} proposed a temporally heterogeneous survival framework to model social behavior dynamics, whose model parameters can be solved by maximum likelihood estimation. The model can be applied to user-to-user communication. Gomez-Rodriguez et al. \cite{gomez2013modeling}   studied the formation of an information cascade in a network based on survival theory. Last but not least, Li et al. \cite{li2017prospecting} applied survival analysis in modeling the career paths. They formulated the problem as a multi-task learning and achieved favorable performance against several other state-of-the-art machine learning methods.

\section{State transition model via survival analysis}
\label{sec:main_model}
 
A mobile notification may be delivered through one or many channels such as a sound, a badge count update on the app icon, and an alert shown on the lock screen or as banners. A UI push notification shown in Figure \ref{fig:push-notification} refers to one with an alert showing the content of the message. Such notifications are more effective at drawing a user's attention but they can also be intrusive or even annoying. As suggested in studies \cite{sahami2014large,pielot2014situ}, the UI push channel  is better for time-sensitive and potentially important notifications, e.g., a connection invitation or a user-to-user message. Other less time-sensitive ones, e.g., a connection's birthday or work anniversary, can be served as badging notifications, meaning we only push a badge count update as shown in Figure \ref{fig:badgecount} and a user has to open the app to see the content as an in-app notification within the mobile application's UI in Figure \ref{fig:inapp-notification}. Unlike UI push notifications, such badging notifications are much less intrusive. On the other hand, the effect of them are more subtle. Users are not able to view and interact with the notification content without opening the app. It usually takes longer time for a user to respond to the badging than the UI push and it is harder to separate the effect of notifications from the organic visits. Attribution challenges also arise when multiple badging notifications have been delivered with more than one badge count. This challenge is further elaborated in Section \ref{sec:time-to-visit} as data censoring. The content of badging notifications are usually less time-sensitive and hence we have more flexibility in their delivery time. 

\begin{figure}[h]
\includegraphics[width=0.6\linewidth]{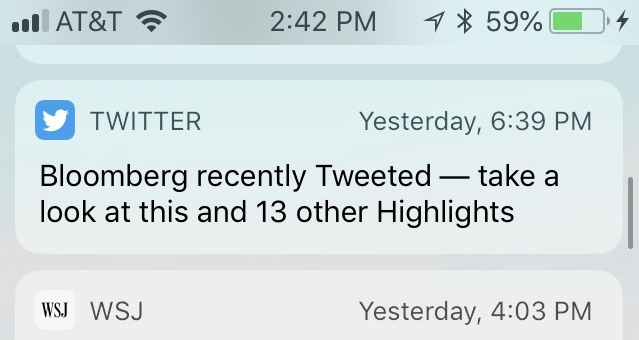}
\caption{\label{fig:push-notification} An example of UI push notifications } 
\end{figure} 

\begin{figure}[h]
\begin{adjustbox}{minipage=\linewidth,scale=0.6}
\begin{subfigure}{0.32\textwidth}
\includegraphics[width=\linewidth]{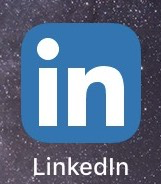}
 \label{fig:bc0}
\end{subfigure}\hspace*{\fill}
\begin{subfigure}{0.32\textwidth}
\includegraphics[width=\linewidth]{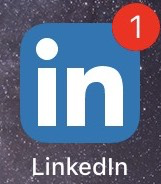}
 \label{fig:bc1}
\end{subfigure}\hspace*{\fill}
\begin{subfigure}{0.32\textwidth}
\includegraphics[width=\linewidth]{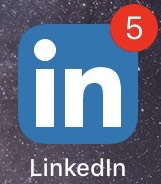}
 \label{fig:bc5}
\end{subfigure}
\end{adjustbox}
\caption{Visual appearance of badge counts} \label{fig:badgecount}
\end{figure}

\begin{figure}[h]
\includegraphics[width=0.6\linewidth]{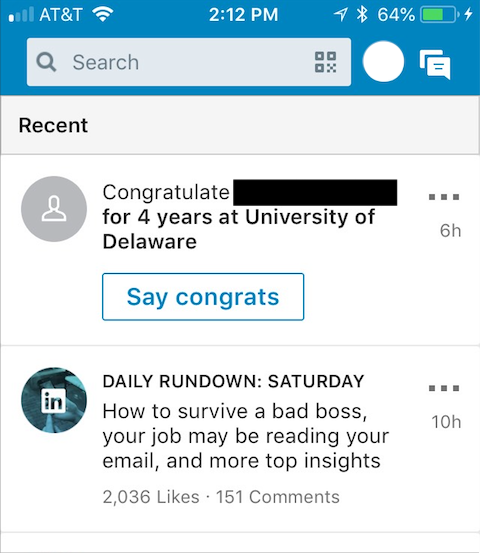}
\caption{\label{fig:inapp-notification} An example of in-app notification } 
\end{figure} 
In this section, we develop a state transition model to describe and predict the interactions between users and notifications. We  focus on badging notifications, which has a more subtle effect to model and is less studied in the literature.  The methodology can be extended to UI push notifications with possibly different distribution assumptions as they are responded to more quickly.  

%Different channels they use, we view both as interventions to  users' behavior. Some notifications have to be delivered instantly, for example, if a user get a message from another user, we want to the user to be immediately notified. In this case, we want to understand how notifications as interventions change the user's visiting time of the app. In other cases, notifications are not time sensitive and we have more control over when to send. One example would be a connection of a user is having birthday on a date, we can decide how much in advance we would like to inform the user. We are not only interested in understanding the dynamics, but also how to maximize the interventional effect of notification with better timing.

\subsection{State Transition Model}
\label{sec:statetransition}
We aim to learn how notifications as interventions promote user engagement and bring more value to users. Notifications may change users' mobile context state in various ways. For badging notifications, the most obvious one would be the change of the outer badge count. They may also change the notification inventory within the app.   

Let $M$ be a notification event, $s$ be a mobile context state, and $t_{s}$ be the time to the next visit since the start time of the state $s$. Figure \ref{fig:state-transition} shows how a state transition model works. After a notification $M_0$, a mobile context state stays at $s_{0}$. Then at any evaluation time, we consider whether or not to send a notification $M_1$ to a user, who has stayed in state $s_0$ for $W_0$ time. The mobile context state will change to $s_{1}$ if $M_{1}$ is received. Note that a user's visit can also change the state, so $s_0$ may start from the most recent visit event or the most recent notification event, whichever comes later.

\begin{figure}[h]
\includegraphics[width=\linewidth]{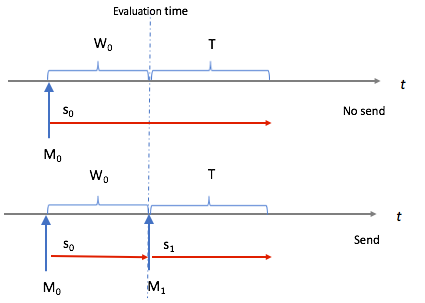}
\caption{\label{fig:state-transition} Illustration of state transition} 
\end{figure}

In our state transition model, we assume that users' engagement behaviors depend on both their mobile context states and users' characteristics. If $M_1$ is sent, then state $s_1$ kicks in and the probability of a user visiting our app within the next $T$ time would be 
\begin{equation}
\label{eq:pvisit}
P(\text{visit|send}) = \Pr(t_{s_1}< T \mid \boldsymbol{z},  s_{1}) = F_{t_{s_1} \mid \boldsymbol{z},  s_{1}}(T ),
\end{equation}
where $s_{1}$ represents this user's new mobile context state after the notification is sent, $\boldsymbol{z}$ represent this user's features and $F_{t_{s_1} \mid \boldsymbol{z},  s_{1}}$ is the cumulative distribution function of time-to-visit $t_{s_1}$ given $(\boldsymbol{z},  s_{1})$. $T$ is the prediction window whose value is usually chosen based on the specific problem instance. For example, we can set it to be 24 hours if we want to focus on daily active users.

%
%When we consider the interventional effect of a notification, we can compare the following two probabilities,
%
%$$\Pr(t \leq T\mid \boldsymbol{z}, m), $$
%the probability of the user  visiting the site/app in the next $T$ time window given a notification $m$ sent, and 
%
%$$ \Pr(t \leq T\mid \boldsymbol{z}, \neg m),$$
%the probability of the user  visiting the site/app in the next $T$ time window without a notification sent. if the difference is large, then there is a strong motivation for us to send a notification. We choose $T$ depending our use cases. We could set it to be one day if we want to boost daily active users.
%
%In the state transition model, the first probability can be translated as 

If we decide not to send a notification $M_1$, the user will stay in the current state $s_{0}$. Then the probability of the next visit within the next $T$ time is 
\begin{align}
P(\text{visit|not send})=&\Pr (t_{s_0} \leq T +  W_{0} |\boldsymbol{z},  s_{0}, t_{s_0} >  W_{0}  ) \nonumber\\
 = &\frac{\Pr (W_{0} < t_{s_0} \leq T +  W_{0} |\boldsymbol{z},  s_{0} ) } { \Pr ( t_{s_0} > W_{0} |\boldsymbol{z},  s_{0}) }  \nonumber\\
 = & \frac{ F_{t_{s_0} \mid \boldsymbol{z},  s_{0}}(T +  W_{0} ) -  F_{t_{s_0} \mid \boldsymbol{z},  s_{0}}(W_{0} )} {1 - F_{t_{s_0} \mid \boldsymbol{z},  s_{0}}( W_{0} )}, \label{eq:pvisitneg}
\end{align}
which is the probability of time-to-visit from the last state $t_{s_0}$ being less than or equal to $T+W_0$ given that $t_{s_0}$ is already greater than $W_0$.

We name the difference between \eqref{eq:pvisit} and \eqref{eq:pvisitneg} the delta effect, which is a function of $T$ and $W_0$ given $\boldsymbol{z}$, $s_0$ and $s_1$,
\begin{align}
\Delta F(W_0,T)&=&& P(\text{visit|send}) -P(\text{visit|not send})  \nonumber\\
&=& &F_{t_{s_1} \mid \boldsymbol{z},  s_{1}}(T ) -\frac{ F_{t_{s_0} \mid \boldsymbol{z},  s_{0}}(T +  W_{0} ) -  F_{t_{s_0} \mid \boldsymbol{z},  s_{0}}(W_{0} )} {1 - F_{t_{s_0} \mid \boldsymbol{z},  s_{0}}( W_{0} )}  .
\label{eq:deltapvisit}
\end{align}
The delta effect predicts the additional probability of visit in the next $T$ time by sending a notification at the moment. The larger the delta effect is, the more motivation we have to deliver a notification. 
 
In \eqref{eq:deltapvisit}, we need to learn  the distribution of users' time to visit in each state $ F_{t \mid \boldsymbol{z},  s}( T)$ to predict the delta effect. We  explain how we estimate this distribution in Section \ref{sec:time-to-visit}.

%%%%%%%%%%%%%%%%%%%%%%%%%%%%%%%%%%%%%%%%%%%%%%%%%%%%%%%%%%

\subsection{Time-to-visit Forecasting}
\label{sec:time-to-visit}

One of the challenges for learning $ F_{t \mid \boldsymbol{z},  s}( T)$ is  that we can not always observe the time to visit after a notification send event, because we may send out another notification before the user's next visit.  Figure \ref{fig:censoring} illustrates the mobile activities of a user. After $T_1$ with notification event $M_1$, we observe a visit $V_1$. And $T_4$ after notification event $M_4$ we observe a visit $V_2$. We do not observe a visit after $M_2$ and $M_3$ before their next notification events $M_3$ and $M_4$, respectively. In the latter cases, the two observations are censored. A censored observation only tells you that the visit event has not happened before the next notification arrives. Such censored observations are very common in notification training data, especially for less active users since the average time-to-visit after a notification delivery is longer.

\begin{figure}[h]
\includegraphics[width=\linewidth]{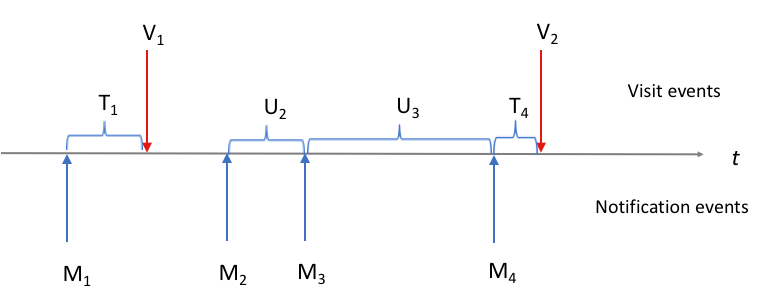}
\caption{\label{fig:censoring} Right-censoring} 
\end{figure}

Therefore, we observe either a visit time $T_i$, or a censored time $U_i$. An observation in survival analysis can be
conveniently represented by a triplet $(X_i; T_i; \delta_i)$. Here $X_i$ is a feature vector containing both users' features $\boldsymbol{z}$ and state features  $s$; $\delta_i$ is the censoring indicator, specifically, $\delta_i= 1 $ for an uncensored instance and $\delta_i = 0$ for a
censored instance; and $T_i$ denotes the observed visit time if $\delta_i = 1 $ and a censoring time if  $\delta_i = 0$.

While it is possible to avoid or alleviate such censoring by collecting data from a controlled experiment, we argue that it is impractical in many cases. For example, in a controlled study we send a mobile notification to every user in the treatment group at the beginning of the experiment, and then monitor the next visit event without sending more notifications in-between.  First, the treated users may get very negative user experience without being promptly notified. Secondly, the experiment may take a long time to observe a visit event for less active users. Lastly, it becomes too costly to repeat the experiment frequently when the model has to be re-trained over time with updated user bases and features.

Survival methods correctly incorporate information from both
censored and uncensored observations  for estimation, through maximizing the following likelihood function

\begin{align}
L=\prod_{i: \delta_i=1} f(t = T_i \mid X_i) \prod_{i: \delta_i=0} \Pr (t > T_i \mid X_i)\nonumber\\
= \prod_{i=1}^{n} \left(f_{t \mid X_i}( T_i)\right)^{\delta_i} \left((1-  F_{t \mid X_i}( T_i))\right)^{1-\delta_i},
\label{eq:survival-likelihood}
\end{align}
where $f_{t \mid X_i}$ is the probability density function.

A well-known survival model is the Cox proportional hazards model \cite{cox1992regression,kapoor2014hazard}. It is a semi-parametric model built  upon the assumption of
proportional hazards. In other words, it assumes that the effects of the predictor variables on survival are constant over time and are additive in one scale. This assumption may not be realistic for our application. In addition, a nonparametric baseline hazard function from the Cox model is difficult to interpret and to conduct statistical inference with. A popular alternative survival model is the parametric log-linear model, which is also known as accelerated failure-time (AFT) model \cite{wei1992accelerated,keiding1997role}. In this model, the effect of changing a covariate is to accelerate or decelerate the time-to-event by some factor. The parametric form makes it much easier to evaluate $F(W_0,T)$ in \eqref{eq:deltapvisit}. In addition, a property in Lemma \ref{lemma:decreasing} works well in practice to space notifications. Therefore,  we  use the AFT model for our time-to-event forecasting. 

The AFT model proposes the following relationship between a random time-to-visit $T_i$ and covariates $\boldsymbol{X}_i$,
\begin{equation}
\label{eq:aft}
\log T_i =  \boldsymbol{b} \boldsymbol{X}_i + \sigma \epsilon_i,
\end{equation}
where $\epsilon_i$ are \textit{independent and identically distributed} (i.i.d.) random errors.

Popular distributions for $\epsilon_i$ are logistic, Gaussian and extreme value distributions, leading to log-logistic, log-Gaussian, and Weibull distributions for $T_i$, respectively. Based on our data analysis and prior knowledge, the time-to-visit for badging notifications given the users' features and state features does not quite depend on how much time has elapsed already, which is the memoryless property. The distribution may be close to an exponential distribution, a special case of the Weibull distribution. Therefore, we assume a Weibull distribution for $t\mid\boldsymbol{z},  s$. The Weibull distribution is a flexible model for time-to-event data \cite{klein2005survival}. The probability density function and the cumulative distribution function of Weibull distribution are

\begin{equation}
 f(T;\lambda,\alpha) =\begin{cases}\alpha\lambda T^{\alpha-1}e^{-\lambda T^{\alpha}} & T\geq0 ,\\0 & T<0,\end{cases} 
 \end{equation}
 and 
\begin{equation}
\label{eq:weibullcdf}
F(T;\lambda, \alpha) = \Pr(t \leq T) =   1 - e^{-\lambda T^\alpha} \quad T\geq0.
\end{equation}
The exponential distribution is a special case when $\alpha = 1$.

Assume $\epsilon_i$ in \eqref{eq:aft} follows an extreme value distribution with

\begin{equation}
\label{eq:extremevalue}
 f_{\epsilon}(t) = e^{(t-e^t)}, F_{\epsilon}(t) = 1- e^{-e^t},
\end{equation}
then $T_i$ follows Weibull distribution \cite{klein2005survival} with 
\begin{equation}
\label{eq:lambda_alpha}
\lambda_i = e^{-\mu_i/\sigma}; \quad\alpha = 1/\sigma,
\end{equation}
where $\mu_i =  \boldsymbol{b} \boldsymbol{X}_i$. Note that the model assumes no heteroscedasticity for simplicity, which implies that $\sigma$ as well as $\alpha$ are constants. It is possible to assume heteroscedasticity and model $\sigma$ as a function of features $\boldsymbol{X}_i$, adding more personalization in estimating the distribution of time-to-visit $T_i$ for different users at different states. On the other hand, the maximum likelihood estimation is going to be more computationally challenging.

\begin{lemma}
\label{lemma:decreasing}
If $t_{s_0} \mid \boldsymbol{z},  s_{0}$ follows a Weibull distribution $ f(T;\lambda_0,\alpha_0) $ with $\alpha_0 \in (0,1)$, then $P(\text{visit|not send})$ in \eqref{eq:pvisitneg} is decreasing and thus $\Delta F(W_0,T)$ in \eqref{eq:deltapvisit} is increasing in $W_0$, the time that has elapsed already in state $s_0$, for any given $T > 0$.
\end{lemma}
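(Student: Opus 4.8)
The plan is to substitute the Weibull cumulative distribution function \eqref{eq:weibullcdf} directly into the conditional probability \eqref{eq:pvisitneg} and show that the quotient collapses to a single exponential whose monotonicity in $W_0$ can be read off almost immediately. Writing $F(x) = 1 - e^{-\lambda_0 x^{\alpha_0}}$ for the CDF of $t_{s_0}\mid\boldsymbol{z},s_0$, the numerator $F(T+W_0) - F(W_0)$ becomes $e^{-\lambda_0 W_0^{\alpha_0}} - e^{-\lambda_0 (T+W_0)^{\alpha_0}}$, while the denominator $1 - F(W_0)$ is simply $e^{-\lambda_0 W_0^{\alpha_0}}$. Dividing, the common factor $e^{-\lambda_0 W_0^{\alpha_0}}$ cancels and I obtain
\[
P(\text{visit|not send}) = 1 - e^{-\lambda_0\left[(T+W_0)^{\alpha_0} - W_0^{\alpha_0}\right]}.
\]

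First I would isolate the exponent and set $g(W_0) = (T+W_0)^{\alpha_0} - W_0^{\alpha_0}$. Since $x \mapsto 1 - e^{-\lambda_0 x}$ is strictly increasing for $\lambda_0 > 0$, the behavior of $P(\text{visit|not send})$ in $W_0$ is governed entirely by the behavior of $g$, so the whole claim reduces to showing that $g$ is decreasing on $W_0 > 0$.

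Next I would differentiate: $g'(W_0) = \alpha_0\left[(T+W_0)^{\alpha_0-1} - W_0^{\alpha_0-1}\right]$. This is where the hypothesis $\alpha_0 \in (0,1)$ does all the work. The exponent $\alpha_0 - 1$ is negative, so $x \mapsto x^{\alpha_0-1}$ is strictly decreasing on $(0,\infty)$; because $T > 0$ forces $T + W_0 > W_0$, the bracket is negative and hence $g'(W_0) < 0$, so $g$ is strictly decreasing. (Equivalently, one can argue without calculus from the concavity of $x \mapsto x^{\alpha_0}$ when $\alpha_0 < 1$, which yields the same shrinking-increment property.) It follows that $P(\text{visit|not send})$ is decreasing in $W_0$.

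Finally, since $P(\text{visit|send}) = F_{t_{s_1}\mid\boldsymbol{z},s_1}(T)$ in \eqref{eq:pvisit} does not depend on $W_0$, the delta effect $\Delta F(W_0,T) = P(\text{visit|send}) - P(\text{visit|not send})$ in \eqref{eq:deltapvisit} increases in $W_0$ exactly as $P(\text{visit|not send})$ decreases, which completes the argument. I do not anticipate a genuine obstacle; the only subtlety worth flagging is recognizing that the quotient in \eqref{eq:pvisitneg} simplifies to one clean exponential, and that the shape condition $\alpha_0 < 1$ — a decreasing-hazard, sub-one Weibull — is precisely what makes the elapsed-time term $g$ shrink as $W_0$ grows, i.e. what gives the lemma its practical spacing interpretation.
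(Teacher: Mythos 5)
Your proposal is correct and follows essentially the same route as the paper: both substitute the Weibull CDF into \eqref{eq:pvisitneg}, cancel the common factor $e^{-\lambda_0 W_0^{\alpha_0}}$, and reduce the claim to the inequality $(T+W_0)^{\alpha_0-1} < W_0^{\alpha_0-1}$ for $\alpha_0 \in (0,1)$. The only cosmetic difference is that you differentiate the exponent $g(W_0)$ and invoke monotonicity of $x \mapsto 1 - e^{-\lambda_0 x}$, whereas the paper differentiates $\Delta F$ directly; the key step is identical.
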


\begin{proof}
With the Weibull distribution in \eqref{eq:weibullcdf}, $\Delta F(W_0,T)$ in \eqref{eq:deltapvisit} becomes
\begin{align}
&\Delta F(W_0,T)\nonumber\\
 = &  F_{t_{s_1} \mid \boldsymbol{z},  s_{1}}(T ) -\frac{ F_{t_{s_0} \mid \boldsymbol{z},  s_{0}}(T +  W_{0} ) -  F_{t_{s_0} \mid \boldsymbol{z},  s_{0}}(W_{0} )} {1 - F_{t_{s_0} \mid \boldsymbol{z},  s_{0}}( W_{0} )}  \nonumber\\
 = & 1 - e^{-\lambda_1 ( T)^{\alpha_1}} - \frac{\{1 - e^{-\lambda_0 (T + W_{0})^{\alpha_0}}\} - \{1 - e^{-\lambda_0 ( W_{0})^{\alpha_0}}\}}{ e^{-\lambda_0 ( W_{0})^{\alpha_0}} }\nonumber\\
= &  e^{-\lambda_0 (T + W_{0})^{\alpha_0} + \lambda_0 (W_{0})^{\alpha_0} } - e^{-\lambda_1 ( T)^{\alpha_1}}.\nonumber
 \end{align}

Taking derivative with respect to $W_{0}$,

\begin{align}
& \frac{\partial \Delta F(W_{0},T)}{\partial W_{0}}  \nonumber\\
= &  e^{-\lambda_0 (T + W_{0})^{\alpha_0} + \lambda_0 (W_{0})^{\alpha_0} }\{ -\lambda_0 \alpha_0 (T + W_{0})^{\alpha_0 -1} + \lambda_0 \alpha_0(W_{0})^{\alpha_0-1}\}. \nonumber
 \end{align}
  
Since $(T + W_{0})^{\alpha_0 -1} < (W_{0})^{\alpha_0-1}$ for  $\alpha_0 \in (0,1)$ and $T>0$. Then we have $ \frac{\partial\Delta F(W_{0},T)}{\partial W_{0}} > 0$ for  $\alpha_0 \in (0,1)$ and $T>0$.
\end{proof}

Lemma \ref{lemma:decreasing} shows that if  $t_{s_0} \mid \boldsymbol{z},  s_{0}$  follows a Weibull distribution with  $\alpha_0 \in (0,1)$, the delta effect in \eqref{eq:deltapvisit} can be calculated as 

\begin{equation}
\label{eq:weibulldeltapvisit}
\Delta F(W_{0},T)  =  e^{-\lambda_0 (T + W_{0})^{\alpha_0} + \lambda_0 (W_{0})^{\alpha_0} } - e^{-\lambda_1 T^{\alpha_1}} ,
\end{equation}
and is increasing in $W_{0}$. This suggests we can bring more value to a user by sending a notification when the user has stayed in a state for a longer time. In other words, incorporating the delta effect into decision making reduces the frequency of sending a notification, because short intervals between notifications do not engage the user's attention effectively. The model we learned from data in Section \ref{sec:datacollection} gives  $\alpha \in (0,1)$, which is in line with our conjecture.

Following \eqref{eq:survival-likelihood}, the likelihood function becomes

\begin{align}
L= \prod_{i=1}^{n} \left(f_{\epsilon}\left(\frac{\log T_i - \boldsymbol{b} \boldsymbol{X}_i}{\sigma}\right)\right)^{\delta_i} \left((1-  F_{\epsilon}\left( \frac{ \log T_i - \boldsymbol{b} \boldsymbol{X}_i}{\sigma}\right))\right)^{1-\delta_i},
\label{eq:aft-likelihood}
\end{align}
where $f_{\epsilon}$ and $F_{\epsilon}$ are from the extreme value distribution as in  \eqref{eq:extremevalue}. Finally the parameters in AFT models $(\boldsymbol{b}, \sigma)$ can be estimated by maximizing the above likelihood function. 

\subsection{Calculation of the Delta Effect}
Once we learn the parameter estimation $(\hat{\boldsymbol{b}}, \hat{\sigma})$ from model training, we can calculate the delta effect for user $i$ at a given time as follows,

\begin{itemize}
\item Get  all the features $\boldsymbol{X}_{0,i}$ including the state at the moment and the time since last state (i.e., badge count update) $W_{0,i}$ for member $i$.
\item Derive the new features $\boldsymbol{X}_{1,i}$ given that a notification is sent at the moment, which updates the badge count as a state feature and state interaction features. 
\item According to \eqref{eq:lambda_alpha}, calculate 

\begin{align*}
\hat{\lambda}_{0,i} = e^{- \hat{\boldsymbol{b}} \boldsymbol{X}_{0,i}/\hat{\sigma}},\quad\hat{\alpha}_0= 1/\hat{\sigma};\\
 \hat{\lambda}_{1,i} = e^{- \hat{\boldsymbol{b}} \boldsymbol{X}_{1,i}/\hat{\sigma}}, \quad\hat{\alpha}_1= 1/\hat{\sigma}.
\end{align*}

\item Apply the above values to \eqref{eq:weibulldeltapvisit}, and calculate the delta effect

\begin{equation}
\label{eq:pvisit_calculation}
\Delta F_i(W_{0,i},T)  =  e^{-\hat{\lambda}_{0,i} (T + W_{0,i})^{\hat{\alpha}_0} + \hat{\lambda}_{0,i} (W_{0,i})^{\hat{\alpha}_0} } - e^{-\hat{\lambda}_{1,i} T^{\hat{\alpha}_1}}.
\end{equation}
\end{itemize}

\section{Data collection and model training}
\label{sec:datacollection}
Collecting large-scale unbiased training data is challenging, especially in the case of observational data. We collect data at LinkedIn from hundreds of millions of users for a given week including all badging notification events delivered to users and all user visit events. For each notification event, we include 3 broad categories of features in $\boldsymbol{X_i}$.
 \begin{itemize}
\item user's profile features such as locale and network size.
\item State features such as badge count.
\item user's activity features such as user's last visit time, the number of site visits over the past week and the number of notifications received over the past week.
\end{itemize}
In addition, we also include interactions between the above features, such as interaction terms between the badge count and the profile features so that we can learn different sensitivity to the badge count from different users.

To get the response $T_i$ and censoring indicator $\delta_i$, we sort  notification events and visit events in the temporal order for each user so that we can get the next event type and next event time $T_i$. If the next event is a visit, then $\delta_i=1$; otherwise $\delta_i=0$.  Note that the next event and next visit may extend beyond the given week, and thus the following week's data may be needed  and joined accordingly. We then remove potential outliers by discarding records from users who receive too many notifications or visit too many times. Such records may come from erroneous tracking or abnormal accounts. Next, we split a week's notification data into training and test data with a ratio of 4:1. The test data are held out for evaluation in Section \ref{sec:offline-evaluation}.

We train the AFT model with the training data on  using Spark MLlib \cite{Meng:2016:MML:2946645.2946679} and obtain $\hat{\boldsymbol{b}}$ and $\hat{\sigma}$. Parameters in  the conditional Weibull distribution can be calculated as $\hat{\alpha} = 1/\hat{\sigma}$,  $\hat{\lambda}_i = e^{-\hat{\boldsymbol{b}} \boldsymbol{X}_i/\hat{\sigma}}$. 

The model we learned from training data suggests very different feature importance from that of a notification CTR model. For example, the badge count is a strong predictor and most people are more sensitive to one badge count increase when the badge count is low and become indifferent when the badge count is high. On the other hand, the badge count, the time after the last notification sent are usually not strong signals for a notification CTR model based on our previous experience. The two models can be complementary to each other in a MOO setup described in Section \ref{sec:mooproblem}, since they seem to capture different aspects of notifications.

The $\hat{\sigma}$ we learned  is greater than $1$, so we have $\hat{\alpha}  \in (0,1)$, suggesting that  the $ \Delta F_i(W_{0,i},T) $  in \eqref{eq:deltapvisit} is increasing in $W_{0}$ according to Lemma \ref{lemma:decreasing}. This is aligned with our intuition that the longer time spacing we have from the previous notification send time, the more incentive we have to send another notification.

The model we train also suggests that the marginal effect on user engagement diminishes as the badge count increases. The interaction between badge count and user features are significant, meaning different users have different sensitivity to badging.

\section{Applications and Thresholds}
In this section, we show how our model can be leveraged by different notification decision systems. 
\subsection{Notification MOO Problems}
\label{sec:mooproblem}

The model works well with notification MOO problems as a utility function. Consider a typical example where we have notifications available to send to $N$ users and we would like to maximize the total engagement gains while increasing the total notification clicks and controlling the send volume. Let $y_i$ be the decision variable for notification $M_i$ with 1 indicating send and 0 not send. $\Delta F_i(W_{0,i},T)$ is the predicted session gain, where $W_{0,i}$ is the time since last badge update and $T$ is the prediction time window we are interested in, e.g., the next 24 hours. Assuming we have another model that predicts the probability of a click $P_i(click)$ for a notification available for user $i$ given it is sent, we can formulate a MOO problem,

\begin{equation}
\label{eq:vo_moo}
\begin{aligned}
&\text{Maximize}  &&\sum_{i=1}^{N} \Delta F_i(W_{0,i},T) y_i&\\
& \text{subject to}  &&  \sum_{i=1}^{N} P_i(click) y_i \geq C_{click},  \\
&&&  \sum_{i=1}^{N}  y_i \leq C_{send},\\
&&& 0 \leq y_i \leq 1.
\end{aligned}
\end{equation}
The objective above is to maximize user visits due to notifications, which is quantified by $\Delta F_i(W_{0,i},T)$ if notification $i$ is sent. The first constraint requires the total number of clicks on notifications to be greater than or equal to $C_{click}$, thus ensuring that the notifications sent are relevant to users. The second requires the total number of notifications sent to be less than or equal to $C_{send}$, thus controlling the send volume to avoid notification overload. 

By considering the duality of the linear programming problem, the resulting decision rule would be 
\begin{equation}
\label{eq:rule_moo}
y_i = 1  \iff  \Delta F_i(W_{0,i},T) + \kappa_1 P_i(click) > \kappa_2,
\end{equation}
where $\kappa_1$ and $\kappa_2$ correspond to dual variables for the first two constraints. The decision rule is a global threshold of $\kappa_2$ across all users on a linear combination of engagement effect  $\Delta F_i(W_{0,i},T)$ and notification quality $P_i(click)$. Similar volume optimization problems can be found in  \cite{gupta2016email,gupta2017optimizing} for emails.

\subsection{Delivery Time Optimization (DTO)}
\label{sec:dto}
Mobile notifications are time-sensitive. Sending notifications at a better timing may increase user engagement and improve user experience. The major advantage of our model is to add a utility to evaluate along the time dimension through two channels. The first one is real-time features in the model itself, such as current badge count. The other is the time since last badge update $W$, which would affect the calculation of $\Delta F_i(W_{0,i},T)$. Under the Weibull distribution assumption, $\Delta F_i(W_{0,i},T)$ is increasing in $W$ according to Lemma \ref{lemma:decreasing}, which means we have less incentive to send a notification if we already sent one shortly before and more if we have not sent one in a long time. This makes the model effective in DTO and notification spacing. 
%decouple whether to send
A straightforward strategy to achieve this is to send a notification to a user $i$ only when $\Delta F_i(W_{0,i},T)$ is above a certain threshold,
\begin{equation}
\label{eq:dto_threshold}
 \Delta F_i(W_{0,i},T)  > \kappa.
\end{equation}
In practice, we find that a modification below can improve the performance in some cases when we optimize user engagement,
\begin{equation}
\label{eq:dto_ratio}
 \frac{\Delta F_i(W_{0,i},T)} {P_i(\text{visit|not send})} > \kappa.
\end{equation}
where $P_i(\text{visit|not send})$ is defined in \eqref{eq:pvisitneg} for user $i$. The latter decision rule \eqref{eq:dto_ratio} can be viewed as a personalized version of \eqref{eq:dto_threshold}, where the personalization is based on a per-member constraint on the number of notification sends.
%This could be due to the implicit constraints that we do not send too many notifications per user and such per user constraints usually lead to personalized threshold instead of a global threshold.  The decision rule in \eqref{eq:dto_ratio} can be viewed as a personalized version of \eqref{eq:dto_threshold} based on $P_i(\text{visit|not send})$ as a baseline probability.  Section \ref{sec:usecase_dto} gives a deployed use case based on this strategy.

\section{Offline evaluation}
\label{sec:offline-evaluation}
We compare our proposed survival-based approach with the conventional baseline logistic regression model. While there are potentially more accurate baseline models such as tree models and deep models, survival models can also be extended beyond a linear structure \cite{ishwaran2008random,pmlr-v56-Ranganath16}. Such a comparison isolates the impact of data censoring and the survival approach from that of feature engineering.  For any given  time frame $T$, we train a logistic regression with the same set of features including their interactions $\boldsymbol{X}_{1,i}$ and a response of whether a user's visit occurs within $T$ after the notification is delivered. 
One advantage of our formulation over a classification task  is that the same model can be used to predict a user's probability of visiting given any time frame $T$ through a Weibull distribution  $F(T;\lambda, \alpha)$ in \eqref{eq:weibullcdf}. Therefore, the same model  can be deployed in different applications, where the prediction windows are chosen differently. On the other hand, we need to train an individual logistic regression model for each different $T$ since the response variables are different. 

To evaluate the prediction performance, we calculate the area under the receiver operating characteristic curve (AUC) for selected $T$ values as binary classification problems. For the AFT model, we calculate $F(T;\hat{\lambda}, \hat{\alpha})$ in \eqref{eq:weibullcdf} to be used in the same way as the logistic prediction for the AUC. Figure \ref{fig:auc} shows how our model compares with the baseline model in terms of the AUCs as a function of the prediction time window $T$. The shorter the time window is, the harder the prediction as a binary classification is, since the randomness of the users' engagement behavior tends to be  dominating in the short term. At $4$ hours prediction window, the model already gives a reasonable AUC of about $0.74$ while the logistic regression model only gives $0.58$. At $24$ hours for daily engagement prediction, the model gives an AUC as high as $0.85$ while the baseline model reaches $0.73$. Interestingly, as we further increase the prediction window, the AUC of the logistic regression model starts to fall while our model reaches $0.89$ at $48$ hours. The decline of the logistic regression could come from bias introduced by attributing a visit event to multiple notification events within the time window $T$. This bias becomes more severe as the time window $T$ increases and likely covers more notifications. The AFT model, on the other hand, avoids such bias by correctly incorporating information from both censored and uncensored observations.

The results show that handling censoring properly is very crucial to mobile notification data. In addition, our model achieved great flexibility in $T$ and superior prediction power compared with the logistic classification models at every given $T$.

\begin{figure}[h]
\includegraphics[width=\linewidth]{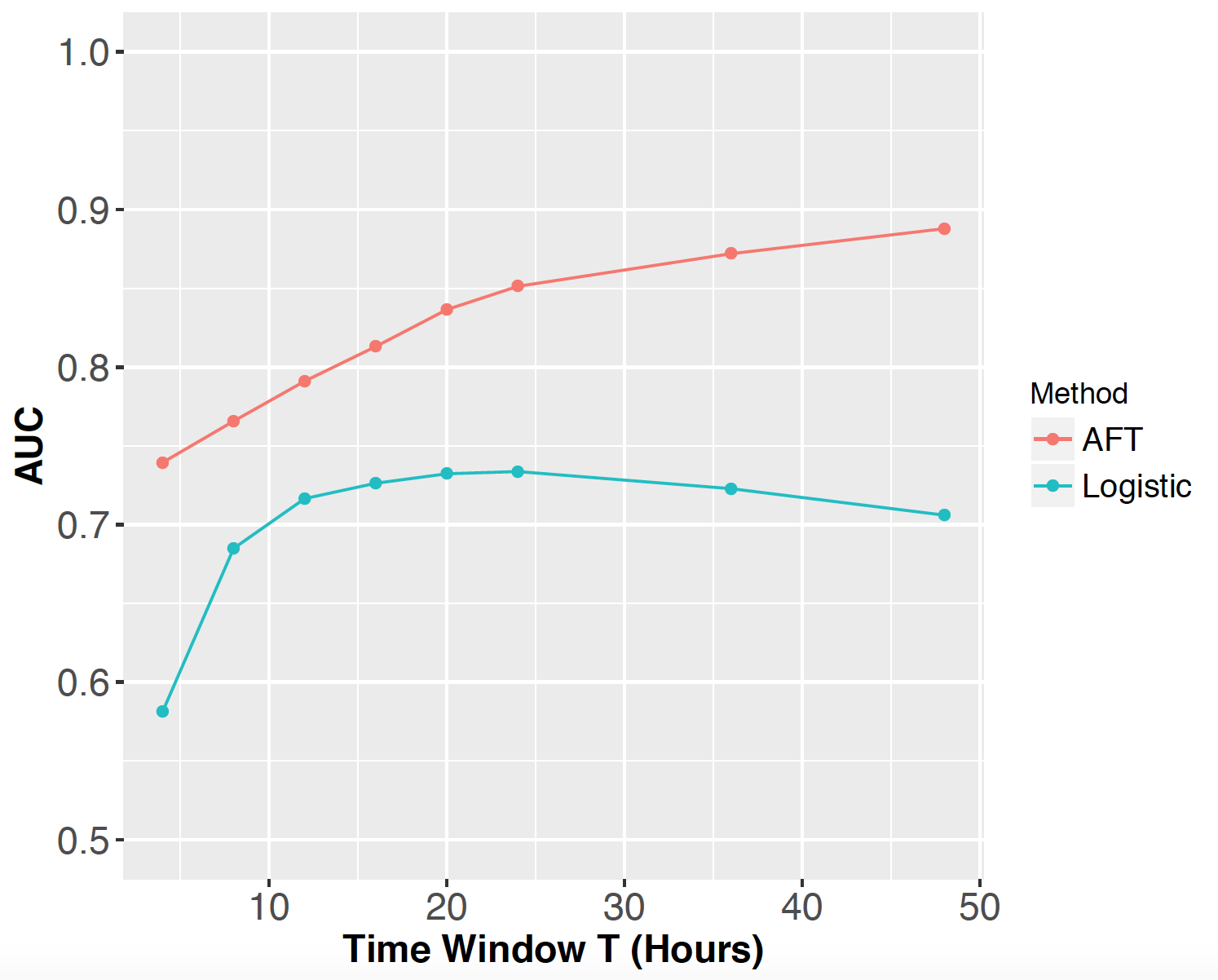}
\caption{\label{fig:auc} AUC as a function of T} 
\end{figure}

\begin{figure*}
  \includegraphics[width =0.9 \linewidth]{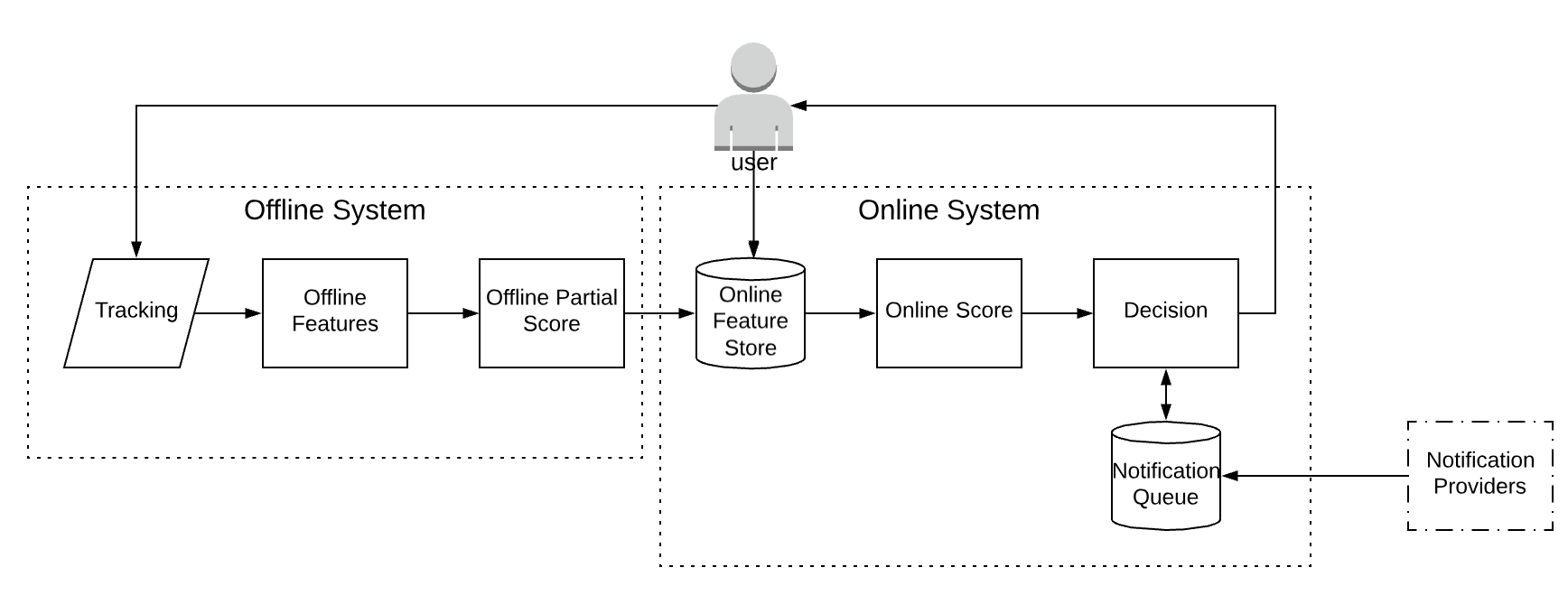}
\vspace{-0.1cm}
  \caption{System architecture}
  \label{fig:infra}
\end{figure*}

\section{Online use case and experiment}
\label{sec:usecase}

In this section, we present a case study deployed at LinkedIn to show how we improve our decision making for mobile notifications with our model.

%\begin{figure}[h]
%\includegraphics[width=0.6\linewidth]{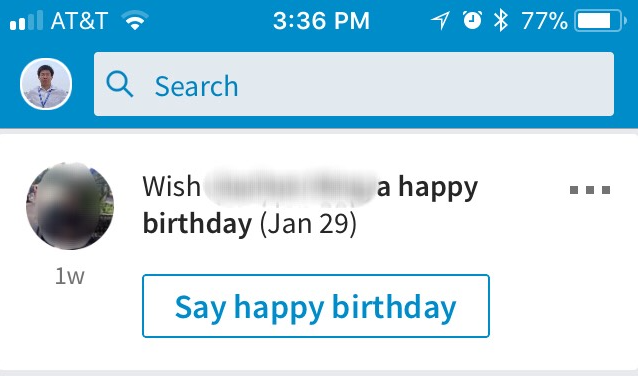}
%\caption{\label{fig:birthday} An Example Birthday Notification} 
%\end{figure}

\subsection{Delivery Time Optimization for Less Time-sensitive Notifications}
\label{sec:usecase_dto}
Social network services send both time-sensitive and less-time-sensitive notifications to users. Time-sensitive ones are usually triggered by user-to-user messages or connections' activities such as sharing an article. These notifications need to be sent immediately when triggered to keep users well-informed. There are also types of notifications which are less time-sensitive. For example, birthday notifications reminding a connection's upcoming birthday can be sent the day of the birthday or several days ahead. In this use case, the less-time-sensitive notifications are first filtered based on a click-through-rate (CTR) prediction model thus dropping notifications with low predicted CTR to ensure high notification quality. The filtered are queued to be then sent within a valid send time window for each individual notification. The send time window ranges from a few hours to a few days depending on the nature of the notification. In this application, ``when to send'' is decoupled from ``whether to send'' since the latter decision is already made at the filtering stage. This makes a good use case of delivery time optimization described in Section \ref{sec:dto}.

In this application, we apply the decision rule in \eqref{eq:dto_ratio}, where $T$ is set to be 4 hours and $\kappa$ is chosen from offline analysis and online tuning to optimize the performance. For comparison, we set up a control treatment, in which notifications are sent immediately when available, and a baseline treatment, in which we send a notification to a user only if their badge counts are less than or equal to 1. For users who have notifications in the queue, we evaluate send decisions every $4$ hours.

\subsection{Online Experiment and Results}
Table \ref{tab:abtest} shows the A/B test experiment results comparing the DTO based on our model with the control and baseline models described above. We are mostly interested in user engagement and notification interactions, which can be characterized by the following metrics.
\begin{itemize}
\item \emph{Sessions}: A session is a collection of full page views made by a single user on the same device type. Two sessions are separated by $15$ minutes of zero activity. 
%This metric counts the number of total site-wide sessions, which is what the company reports externally.
\item \emph{Engaged Feed Sessions}: This metric counts the number of sessions where the user engaged with the newsfeed (either by interacting with feed updates, or by viewing at least $10$ feed updates).
\item \emph{Notification Sessions}: This metric counts the number of sessions in which the user viewed or interacted with the notification page.
\item \emph{Notification Daily Unique Send CTR}: This metric measures the average click-through-rate of notifications sent to a user in a day.
\end{itemize}

The experiment was tested over a full week and the numbers in the table are all %site-wide and 
statistically significant. Compared with the control, which is basically no DTO, our model increased the total sessions by $1.86\%$, notification sessions by $6.19\%$ and engaged feed sessions by $1.78\%$. The higher boost in notification sessions was expected since we are optimizing notification send time. The roughly proportional gain in engaged feed sessions suggests that the additional sessions are of similar quality to existing ones. In addition, the notification daily unique send CTR was increased by 2.51\% against control, suggesting notifications were delivered at better timing resulting in increased user engagement. Compared with the baseline model, the proposed model showed healthy gains in all four metrics. One interesting observation is that the increase in notification daily unique send CTR (+4.48\%) is higher than the comparison with the control (+2.51\%). This suggests that although the badge count baseline model increases user engagement, it reduces the CTR compared with the control, implying that it may not be a desirable user experience.

\begin{table}[!h]
\caption{Online A/B results for delivery time optimization}
\centering
  \begin{tabular}{ | l | c | c| }
    \hline
    Metric & vs. Control  & vs. Baseline\\
    \hline
    \hline
   Sessions & + 1.86\% & +0.67\%\\
     \hline
    Engaged Feed Sessions  & + 1.78\% & +0.69\%\\
    \hline
     Notification Sessions  & +6.19\% & +1.51\%\\
    \hline
    Notification Daily Unique Send CTR & +2.51\% & +4.48\%\\
    \hline
  \end{tabular}
\label{tab:abtest}
\end{table}

\subsection{System Architecture}

We outline a design of a notification decision system using the state transition model in Figure \ref{fig:infra}. Since the model takes a few real-time features (e.g., current badge count, time since last badge count update) as important signals, having an online scoring system is ideal for model performance. To avoid maintaining all features in an online database, we include an offline component for more static features, such as user profile features. In this offline component, offline features are retrieved from tracking data on our HDFS system and a partial score is calculated based on the trained model coefficients. We push the partial scores to an online feature store daily through Apache Kafka \cite{kreps2011kafka}. The online component maintains realtime features and make realtime decisions based on the real-time $\Delta F_i(W_{0,i},T)$ score.

\section{Discussion}
To our best knowledge, this is the first work on probabilistic modeling of interactions between mobile notifications and user engagement at scale. We develop a state transition model and derive a delta effect to measure the effectiveness of a notification. With a common existence of censoring in observational mobile notification data, we estimate the delta effect through an AFT regression with a Weibull distribution. The prediction from this survival regression is both flexible to apply and superior in prediction accuracy compared to baseline models with the same feature set.

Our state transition model is generalizable and can have broader applications. While we focus on modeling the badging notifications, our model is applicable for all types of mobile notifications. For example, UI push notifications can be modeled with a distribution possibly different from a Weibull distribution. 

We consider a user's visit as a reward to a mobile notification. However, the reward can be generalized to a user's purchase event for on-line shopping apps such as Amazon  or a user's content creation event for question-and-answer apps such as Quora. In the cases where data censoring is a major concern for modeling mobile notifications, we provide a general framework to evaluate the effectiveness of a notification towards driving a reward.

 \section*{Acknowledgement}
 We would sincerely like to thank Rupesh Gupta, Matthew Walker, Kinjal Basu, Yan Gao, Haoyu Wang, Myunghwan Kim, Guangde Chen,  Ajith Muralidharan for their detailed and insightful feedback during the development of this model.

%%%%%%%%%%%%%%%%%%%%%%%%%%%%

\bibliographystyle{ACM-Reference-Format}
\bibliography{pvisit} 

\end{document}